\def\eqref#1{equation~\ref{#1}}
\def\1{\bm{1}}
\def\vb{{\bm{b}}}
\def\vd{{\bm{d}}}
\def\vh{{\bm{h}}}
\def\vl{{\bm{l}}}
\def\vr{{\bm{r}}}
\def\vx{{\bm{x}}}
\def\vy{{\bm{y}}}
\def\vz{{\bm{z}}}
\def\mW{{\bm{W}}}
\def\mX{{\bm{X}}}
\DeclareMathAlphabet{\mathsfit}{\encodingdefault}{\sfdefault}{m}{sl}
\SetMathAlphabet{\mathsfit}{bold}{\encodingdefault}{\sfdefault}{bx}{n}
\def\gD{{\mathcal{D}}}
\def\gF{{\mathcal{F}}}
\def\gL{{\mathcal{L}}}
\def\gN{{\mathcal{N}}}
\def\gT{{\mathcal{T}}}
\def\gX{{\mathcal{X}}}
\def\gY{{\mathcal{Y}}}
\def\sC{{\mathbb{C}}}
\def\sR{{\mathbb{R}}}
\def\sT{{\mathbb{T}}}
\newcommand{\E}{\mathbb{E}}
\newcommand{\KL}{D_{\mathrm{KL}}}
\newcommand{\ubold}{\fontseries{b}\selectfont}
\theoremstyle{definition}
\newtheorem{definition}{Definition}[section]
\newcommand{\tikzcircle}[2][red,fill=red]{\tikz[baseline=-0.5ex]\draw[#1,radius=#2] (0,0) circle ;}%
\theoremstyle{plain}
\newtheorem{proposition}{Proposition}[section]
\theoremstyle{plain}
\theoremstyle{plain}
\theoremstyle{plain}
\newtheorem{lemma}{Lemma}[section]
\newcommand{\C}{{\sC}}
\newcommand{\T}{{\sT}}
\title{Neural ODE Processes}
\author{Alexander Norcliffe\thanks{Equal contribution.} \hspace{0.4mm} \thanks{Work done as an AI Resident at the University of Cambridge.} \\
Department of Computer Science \\
University College London\\
London, United Kingdom \\
\texttt{ucabino@ucl.ac.uk} \\
\And
Cristian Bodnar\footnotemark[1] \hspace{0.4mm}, Ben Day\footnotemark[1]  \hspace{0.4mm}, Jacob Moss\footnotemark[1] \hspace{0.4mm} \&  Pietro Li\`{o}\\
Department of Computer Science\\
University of Cambridge\\
Cambridge, United Kingdom\\
\texttt{\{cb2015, bjd39, jm2311, pl219\}@cam.ac.uk} \\
}
\begin{document}

\maketitle

\begin{abstract}
Neural Ordinary Differential Equations (NODEs) use a neural network to model the instantaneous rate of change in the state of a system. However, despite their apparent suitability for dynamics-governed time-series, NODEs present a few disadvantages. First, they are unable to adapt to incoming data points, a fundamental requirement for real-time applications imposed by the natural direction of time. Second, time series are often composed of a sparse set of measurements that could be explained by many possible underlying dynamics. NODEs do not capture this uncertainty. In contrast, Neural Processes (NPs) are a family of models providing uncertainty estimation and fast data adaptation but lack an explicit treatment of the flow of time. To address these problems, we introduce Neural ODE Processes (NDPs), a new class of stochastic processes determined by a distribution over Neural ODEs. By maintaining an adaptive data-dependent distribution over the underlying ODE, we show that our model can successfully capture the dynamics of low-dimensional systems from just a few data points. At the same time, we demonstrate that NDPs scale up to challenging high-dimensional time-series with unknown latent dynamics such as rotating MNIST digits.  
\end{abstract}

\section{Introduction}

Many time-series that arise in the natural world, such as the state of a harmonic oscillator, the populations in an ecological network or the spread of a disease, are the product of some underlying dynamics. Sometimes, as in the case of a video of a swinging pendulum, these dynamics are latent and do not manifest directly in the observation space. Neural Ordinary Differential Equations (NODEs) \citep{chen2018neural}, which use a neural network to parametrise the derivative of an ODE, have become a natural choice for capturing the dynamics of such time-series \citep{yldz2019ode2vae, rubanova2019latent, alex2020second, kidger2020neural, morrill2020neural}.

However, despite their fundamental connection to dynamics-governed time-series, NODEs present certain limitations that hinder their adoption in these settings. Firstly, NODEs cannot adjust predictions as more data is collected without retraining the model. This ability is particularly important for real-time applications, where it is desirable that models adapt to incoming data points as time passes and more data is collected. Secondly, without a larger number of regularly spaced measurements, there is usually a range of plausible underlying dynamics that can explain the data. However, NODEs do not capture this uncertainty in the dynamics. As many real-world time-series are comprised of sparse sets of measurements, often irregularly sampled, the model can fail to represent the diversity of suitable solutions. In contrast, the Neural Process \citep{garnelo2018conditional, garnelo2018neural} family offers a class of (neural) stochastic processes designed for uncertainty estimation and fast adaptation to changes in the observed data. However, NPs modelling time-indexed random functions lack an explicit treatment of time. Designed for the general case of an arbitrary input domain, they treat time as an unordered set and do not explicitly consider the time-delay between different observations.

To address these limitations, we introduce Neural ODE Processes (NDPs), a new class of stochastic processes governed by stochastic data-adaptive dynamics. Our probabilistic Neural ODE formulation relies on and extends the framework provided by NPs, and runs parallel to other attempts to incorporate application-specific inductive biases in this class of models such as Attentive NPs \citep{kim2019attentive}, ConvCNPs \citep{gordon2019convolutional}, and MPNPs \citep{day2020message}. We demonstrate that NDPs can adaptively capture many potential dynamics of low-dimensional systems when faced with limited amounts of data. Additionally, we show that our approach scales to high-dimensional time series with latent dynamics such as rotating MNIST digits \citep{Casale2018Gaussian}. Our code and datasets are available at \url{https://github.com/crisbodnar/ndp}. 

\begin{figure}[t]
    \centering
    \includegraphics[trim = 0 0 0 0, clip, width=1.\textwidth]{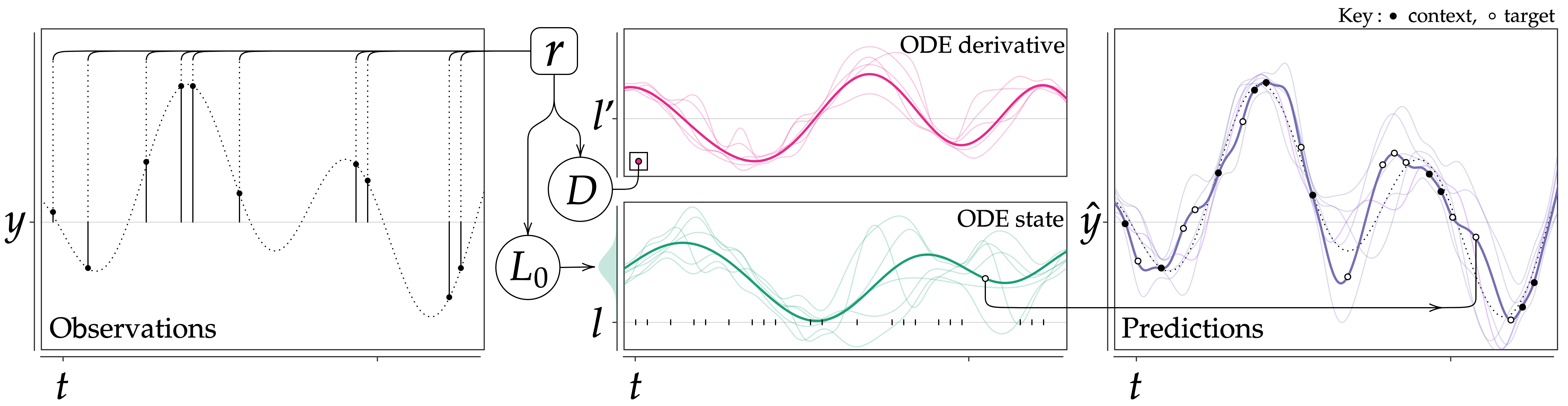}
    \caption{Schematic diagram of Neural ODE Processes. \textit{Left:} Observations from a time series, the context set \tikzcircle[black, fill=black]{2pt}, are encoded and aggregated to form $\vr$ which parametrises the latent variables $D$ and $L_0$. \textit{Middle:} A sample is drawn from $L_0$ and $D$, initialising and conditioning the ODE, respectively. Each sample produces a plausible, coherent trajectory. \textit{Right:} Predictions at a target time, $t_{i}^\T$, are made by decoding the state of the ODE, $\vl(t_{i}^\T)$ together with $t_{i}^\T$. An example is shown with the \tikzcircle[black, fill=white]{2pt} connected from the ODE position plot to the Predictions plot. \textit{Middle \& right:} the bold lines in each plot refer to the same sample, fainter lines to other samples. \textit{All:} The plots are illustrations only.}
    \label{fig: diagram}
\end{figure}

\section{Background and Formal Problem Statement}
\label{sec:problem_statement}

\paragraph{Problem Statement} We consider modelling random functions $F: \gT \to \gY$, where $\gT = [t_0, \infty)$ represents time and $\gY \subset \sR^d$ is a compact subset of $\sR^d$. We assume $F$ has a  distribution $\gD$, induced by another distribution $\gD'$ over some underlying dynamics that govern the time-series. Given a specific instantation $\gF$ of $F$, let $\C = \{(t_i^\C, \vy_i^\C) \}_{i \in I_\C}$ be a set of samples from $\gF$ with some indexing set $I_\C$. We refer to $\C$ as the context points, as denoted by the superscript $\C$. For a given context $\C$, the task is to predict the values $\{\vy_j^\T\}_{j \in I_\T}$ that $\gF$ takes at a set of target times $\{t_j^\T\}_{j \in I_\T}$, where $I_\T$ is another index set. We call $\T = \{(t_j^\T, \vy_j^\T)\}$ the target set. Additionally let $t_\C = \{ t_i | i \in I_\C \}$ and similarly define $y_\C$, $t_\T$ and $y_\T$. Conventionally, as in \citet{garnelo2018neural}, the target set forms a superset of the context set and we have $\C \subseteq \T$. Optionally, it might also be natural to consider that the initial time and observation $(t_0, \vy_0)$ are always included in $\C$. During training, we let the model learn from a dataset of (potentially irregular) time-series sampled from $F$. We are interested in learning the underlying distribution over the dynamics as well as the induced distribution over functions. We note that when the dynamics are not latent and manifest directly in the observation space $\gY$, the distribution over ODE trajectories and the distribution over functions coincide.

\paragraph{Neural ODEs} NODEs are a class of models that parametrize the velocity $\dot{\vz}$ of a state $\vz$ with the help of a neural network  $\dot{\vz} = f_\theta(\vz, t)$. Given the initial time $t_0$ and target time $t_i^\T$, NODEs predict the corresponding state $\hat{\vy}_{i}^\T$ by performing the following integration and decoding operations:
\begin{equation}
    \vz(t_0) = h_1(\vy_{0}),
    \qquad
    \vz(t_i^\T) = \vz(t_0) + \int_{t_0}^{t_i^\T}f_\theta(\vz(t), t)dt,
    \qquad
    \hat{\vy}_{i}^\T = h_2(\vz(t_i^\T)),
\end{equation}
where $h_1$ and $h_2$ can be neural networks. When the dimensionality of $\vz$ is greater than that of $\vy$ and $h_1, h_2$ are linear, the resulting model is an Augmented Neural ODE \citep{NIPS2019_8577} with input layer augmentation \citep{massaroli2020dissecting}. The extra dimensions offer the model additional flexibility as well as the ability to learn higher-order dynamics \citep{alex2020second}. 

\paragraph{Neural Processes (NPs)} NPs model a random function  $F: \gX \to \gY$, where $\gX \subseteq \sR^{d_1}$ and $\gY \subseteq \sR^{d_2}$. The NP represents a given  instantiation $\gF$ of $F$ through the global latent variable $\vz$, which parametrises the variation in $F$. Thus, we have $\gF(\vx_i) = g(\vx_i, \vz)$. For a given context set $\C = \{(\vx_i^\C, \vy_i^\C) \}$ and target set $\vx_{1:n}$, $\vy_{1:n}$, the generative process is given by:
\begin{equation}
    p(\vy_{1:n}, \vz | \vx_{1:n}, \C) =  p(\vz | \C)\prod_{i=1}^{n}\gN(\vy_{i}\vert g(\vx_{i},\vz), \sigma^2),
\end{equation}
where $p(\vz)$ is chosen to be a multivariate standard normal distribution and $\vy_{1:n}$ is a shorthand for the sequence $(\vy_1, \ldots, \vy_n)$. The model can be trained using an amortised variational inference procedure that naturally gives rise to a \textit{permutation-invariant encoder} $q_\theta(\vz | \C)$, which stores the information about the context points. Conditioned on this information, the \textit{decoder} $g(\vx, \vz)$ can make predictions at any input location $\vx$. We note that while the domain $\gX$ of the random function $F$ is arbitrary, in this work we are interested only in stochastic functions with domain on the real line (time-series). Therefore, from here our notation will reflect that, using $t$ as the input instead of $\vx$. The output $\vy$ remains the same.

\section{Neural ODE Processes}
\label{sec:ndp}

\begin{wrapfigure}{r}{0.3\textwidth}
  \vspace{-20pt}
  \begin{center}
    \includegraphics[trim={0.27cm 0 0 0}, width=0.9\linewidth]{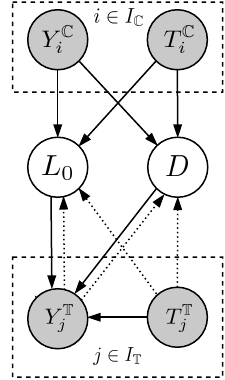}
  \end{center}
  \caption{Graphical model of NDPs. The dark nodes denote observed random variables, while the light nodes denote hidden random variables. $I_\C$ and $I_\T$ represent the indexing sets for the context and target points, respectively. Full arrows show the generative process. Dotted arrows indicate inference.}
  \label{fig:ndp_gm}
  \vspace{-40pt}
\end{wrapfigure}

\paragraph{Model Overview} We introduce Neural ODE Processes (NDPs), a class of dynamics-based models that learn to approximate random functions defined over time. To that end, we consider an NP whose context is used to determine a distribution over ODEs. Concretely, the context infers a distribution over the initial position (and optionally -- the initial velocity) and, at the same time, stochastically controls its derivative function. The positions given by the ODE trajectories at any time $t_{i}^\T$ are then decoded to give the predictions. In what follows, we offer a detailed description of each component of the model. A schematic of the model can be seen in Figure \ref{fig: diagram}.

\subsection{Generative Process}
\label{sec:generative_process}

We first describe the generative process behind NDPs. A graphical model perspective of this process is also included in Figure \ref{fig:ndp_gm}. 

\paragraph{Encoder and Aggregator} Consider a given context set $\C = \{(t_i^\C, \vy_i^\C) \}_{i \in I_{\C}}$ of observed points. We encode this context into two latent variables $L_0 \sim q_L(\vl(t_0) | \C)$ and $D \sim q_D(\vd | \C)$, representing the \textit{initial state} and the \textit{global control} of an ODE, respectively. To parametrise the distribution of the latter variable, the NDP encoder produces a representation $\vr_{i} = f_{e}((t_{i}^\C, \vy_{i}^\C))$ for each context pair $(t_{i}^\C, \vy_{i}^\C)$. The function $f_{e}$ is as a neural network, fully connected or convolutional, depending on the nature of $\vy$. An aggregator combines all the representations $\vr_{i}$ to form a global representation, $\vr$, that parametrises the distribution of the global latent context, $D \sim q_D(\vd | \C) = \mathcal{N}\big(\vd | \mu_{D}(\vr), \mathrm{diag}(\sigma_{D}(\vr))\big)$. As the aggregator must preserve order invariance, we choose to take the element-wise mean. The distribution of $L_0$ might be parametrised identically as a function of the whole context by $q_L(\vl(t_0) | \C)$, and, in particular, if the initial observation $\vy_0$ is always known, then $q_L(\vl(t_0) | \C) = q_L(\vl(t_0) | \vy_0) = \mathcal{N}\big(\vl(t_0) | \mu_{L}(\vy_0), \mathrm{diag}(\sigma_{L}(\vy_0))\big)$. 

\paragraph{Latent ODE} To obtain a distribution over functions, we are interested in capturing the dynamics that govern the time-series and exploiting the temporal nature of the data. To that end, we allow the latent context to evolve according to a Neural ODE \citep{chen2018neural} with initial position $L_0$ and controlled by $D$. These two random variables factorise the uncertainty in the underlying dynamics into an uncertainty over the initial conditions (given by $L_0$) and an uncertainty over the ODE derivative, given by $D$.

By using the target times, $t_{1:n}^\T = (t_{1}^\T,..., t_{N}^\T)$, the latent state at a given time is found by evolving a Neural ODE:
\begin{equation}
\label{eqn: latent_evolution}
    \vl(t^\T_{i}) = \vl(t_{0}) + \int_{t_{0}}^{t^\T_{i}}f_\theta(\vl(t), \vd, t) dt,
\end{equation}
where $f_{\theta}$ is a neural network that models the derivative of $\vl$. As explained above, we allow $\vd$ to  modulate the derivative of this ODE by acting as a global control signal. Ultimately, for fixed initial conditions, this results in an uncertainty over the ODE trajectories. 

\paragraph{Decoder} To obtain a prediction at a time $t^\T_i$, we decode the state of the ODE at time $t^\T_i$, given by $\vl(t^\T_i)$. Assuming that the outputs are noisy, for a given sample $\vl(t^\T_i)$ from this stochastic state, the decoder $g$ produces a distribution over $Y_{t_{i}}^\T \sim p\big(\vy_{i}^\T | g(\vl(t_{i}^\T), t_i^\T)\big)$ parametrised by the decoder output. Concretely, for regression tasks, we take the target output to be normally distributed with constant (or optionally learned) variance $Y_{t_{i}}^\T \sim \gN\big(\vy_{i}^\T\vert g(\vl(t_{i}^\T), t_i^\T), \sigma^2\big)$. When $Y_{t_{i}}^\T$ is a random vector formed of independent binary random variables (e.g. a black and white image), we use a Bernoulli distribution $Y_{t_{i}}^\T \sim \prod_{j = 1}^{\dim(Y)} \text{Bernoulli}\big(g(\vl(t_{i}^\T), t_i^\T)_j\big)$. 

Putting everything together, for a set of observed context points $\C$, the generative process of NDPs is given by the expression below, where we emphasise once again that $\vl(t_i)$ also implicitly depends on $\vl(t_0)$ and $\vd$.
\begin{equation}
    p(\vy_{1:n}, \vl(t_0), \vd | t_{1:n}, \C) =  p\big(\vl(t_0) | \C\big) p(\vd | \C) \prod_{i=1}^{n}p\big(\vy_{i} | g(\vl(t_{i}), t_i)\big),
\end{equation}
We remark that \textbf{NDPs generalise NPs defined over time}. If the latent NODE learns the trivial velocity $f_\theta(\vl(t), \vd, t) = 0$, the random state $L(t) = L_0$ remains constant at all times $t$. In this case, the distribution over functions is directly determined by $L_0 \sim p(\vl(t_0) | \C)$, which substitutes the random variable $Z$ from a regular NP. For greater flexibility, the control signal $\vd$ can also be supplied to the decoder $g(\vl(t), \vd, t)$. This shows that, in principle, NDPs are at least as expressive as NPs. Therefore, NDPs could be a sensible choice even in applications where the time-series are not solely determined by some underlying dynamics, but are also influenced by other generative factors. 

\subsection{Learning and Inference}
\label{sec:inference}

Since the true posterior is intractable because of the highly non-linear generative process, the model is trained using an amortised variational inference procedure. The variational lower-bound on the probability of the target values given the known context $\log p(y_\T | t_\T, y_\C)$ is as follows:  
\begin{equation}
\label{eq:loss_fn}
     \E_{q\big(\vl(t_0), \vd|t_\T, y_\T \big)}\Bigg[\sum_{i \in I_\T} \log p(\vy_i | \vl(t_0), \vd, t_i) + \log \frac{q_L(\vl(t_0)|t_\C, y_\C)}{q_L(\vl(t_0)|t_\T, y_\T)} +
    \log \frac{q_D(\vd|t_\C, y_\C)}{q_D(\vd|t_\T, y_\T)}
    \Bigg],
\end{equation}
where $q_L$, $q_D$ give the variational posteriors (the encoders described in Section \ref{sec:generative_process}). The full derivation can be found in Appendix \ref{app:elbo_proof}. We use the reparametrisation trick to backpropagate the gradients of this loss. During training, we sample random contexts of different sizes to allow the model to become sensitive to the size of the context and the location of its points. We train using mini-batches composed of multiple contexts. For that, we use an extended ODE that concatenates the independent ODE states of each sample in the batch and integrates over the union of all the times in the batch \citep{rubanova2019latent}. Pseudo-code for this training procedure is also given in Appendix \ref{app:pseudocode}.  

\subsection{Model Variations}
\label{sec: model_variations}
Here we present the different ways to implement the model. The majority of the variation is in the architecture of the decoder. However, it is possible to vary the encoder such that $f_{e}((t_{i}^\C, \vy_{i}^\C))$ can be a multi-layer-perceptron, or additionally contain convolutions.

\paragraph{Neural ODE Process (NDP)} In this setup the decoder is an arbitrary function $g(\vl(t_{i}^\T), \vd, t_{i}^\T)$ of the latent position at the time of interest, the control signal, and time. This type of model is particularly suitable for high-dimensional time-series where the dynamics are fundamentally latent. The inclusion of $\vd$ in the decoder offers the model additional flexibility and makes it a good default choice for most tasks. 

\paragraph{Second Order Neural ODE Process (ND2P)} This variation has the same decoder architecture as NDP, however the latent ODE evolves according to a second order ODE. The latent state, $\vl$, is split into a ``position'', $\vl_{1}$ and ``velocity'', $\vl_{2}$, with $\dot{\vl_{1}} = \vl_{2}$ and 
$\dot{\vl_{2}} = f_{\theta}(\vl_{1}, \vl_{2}, \vd, t)$. This model is designed for time-series where the dynamics are second-order, which is often the case for physical systems \citep{yldz2019ode2vae, alex2020second}.

\paragraph{NDP Latent-Only (NDP-L)} The decoder is a linear transformation of the latent state $g(\vl(t_{i}^\T)) = \mW(\vl(t_{i}^\T))+\vb$. This model is suitable for the setting when the dynamics are fully observed (i.e. they are not latent) and, therefore, do not require any decoding. This would be suitable for simple functions generated by ODEs, for example, sines and exponentials. This decoder implicitly contains information about time and $\vd$ because the ODE evolution depends on these variables as described in Equation \ref{eqn: latent_evolution}.

\paragraph{ND2P Latent-Only (ND2P-L)}
This model combines the assumption of second-order dynamics with the idea that the dynamics are fully observed. The decoder is a linear layer of the latent state as in NDP-L and the phase space dynamics are constrained as in ND2P.

\subsection{Neural ODE Processes as Stochastic Processes}

Necessary conditions for a collection of joint distributions to give the marginal distributions over a stochastic process are \textit{exchangeability} and \textit{consistency} \citep{garnelo2018neural}. The Kolmogorov Extension  Theorem then states that these conditions are sufficient to define a stochastic process \citep{oksendal2003stochastic}. We define these conditions and show that the NDP model satisfies them. The proofs can be found in Appendix \ref{app:sp_proofs}.

\begin{definition}[Exchangeability]
Exchangeability refers to the invariance of the joint distribution $\rho_{t_{1:n}}(\vy_{1:n})$ under permutations of $\vy_{1:n}$. That is, for a permutation $\pi$ of $\{1, 2, ..., n\}$, $\pi(t_{1:n}) = (t_{\pi(1)},...,t_{\pi(n)})$ and $\pi(\vy_{1:n}) = (\vy_{\pi(1)},...,\vy_{\pi(n)})$, the joint probability distribution $\rho_{t_{1:n}}(\vy_{1:n})$ is invariant if $\rho_{t_{1:n}}(\vy_{1:n}) = \rho_{\pi(t_{1:n})}(\pi(\vy_{1:n}))$.
\end{definition}

\begin{proposition}
\label{prop:exhangeability}
NDPs satisfy the exchangeability condition.
\end{proposition}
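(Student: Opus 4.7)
The plan is to write out the marginal joint distribution $\rho_{t_{1:n}}(\vy_{1:n})$ over the target outputs by integrating out the two latent variables $\vl(t_0)$ and $\vd$ in the generative process from Section \ref{sec:generative_process}, and then exhibit a bijection between the integrand before and after relabelling by $\pi$. Concretely, starting from
\begin{equation*}
\rho_{t_{1:n}}(\vy_{1:n}) = \int \int p\big(\vl(t_0)\mid \C\big)\, p(\vd \mid \C)\, \prod_{i=1}^{n} p\big(\vy_i \mid g(\vl(t_i), t_i)\big)\, d\vl(t_0)\, d\vd,
\end{equation*}
I would argue exchangeability by three observations, carried out in order.

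First, I would point out that the two prior factors $p(\vl(t_0)\mid \C)$ and $p(\vd \mid \C)$ depend only on the context set $\C$ and on the dummy integration variables, so they are unaffected by any permutation applied to the target indices. Second, I would use the fact that for any fixed sample of $\vl(t_0)$ and $\vd$ the latent trajectory is obtained by solving the ODE in Equation \ref{eqn: latent_evolution}; hence $\vl(t_i)$ is a deterministic function of $t_i$ alone (given the sampled latent variables). Consequently the likelihood factor $p(\vy_i \mid g(\vl(t_i), t_i))$ associated with the pair $(t_i, \vy_i)$ travels with that pair under any permutation. Third, I would invoke the commutativity of scalar multiplication to conclude that
\begin{equation*}
\prod_{i=1}^{n} p\big(\vy_i \mid g(\vl(t_i), t_i)\big) = \prod_{i=1}^{n} p\big(\vy_{\pi(i)} \mid g(\vl(t_{\pi(i)}), t_{\pi(i)})\big),
\end{equation*}
so that swapping $t_{1:n} \mapsto \pi(t_{1:n})$ and $\vy_{1:n} \mapsto \pi(\vy_{1:n})$ leaves the integrand pointwise invariant. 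Since the integration domains are also unchanged, the marginal $\rho$ itself is invariant, which is exactly the exchangeability condition.

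I do not anticipate a genuine obstacle: this is essentially a bookkeeping argument and the structure of the generative model does all the work. The one subtlety worth flagging explicitly in the write-up is that $\vl(t_i)$ must be a \emph{function of $t_i$ only} once the latent variables have been conditioned on, which is why the factor follows its index under permutation — if the derivative $f_\theta$ were allowed to depend on past observations in the target set, this argument would fail. A secondary subtlety is that the context $\C$ is treated as a set (the aggregator is an element-wise mean), so $p(\vl(t_0)\mid\C)$ and $p(\vd\mid\C)$ are unambiguously defined; I would mention this only in passing since the proposition concerns permutations of the target indexing, not the context.
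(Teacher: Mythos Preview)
Your proposal is correct and follows essentially the same approach as the paper: the paper packages your second observation (that for fixed $\vl(t_0)$ and $\vd$ the decoder output $g(\vl(t),t)$ is a deterministic function of $t$) as a separate lemma, and then notes, as you do, that permuting $t_{1:n}$ simply reindexes the product of likelihood factors. Your write-up is more explicit about the factorisation and the commutativity step, but the substance of the argument is identical.
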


\begin{definition}[Consistency]
Consistency says if a part of a sequence is marginalised out, then the joint probability distribution is the same as if it was only originally taken from the smaller sequence $ \rho_{t_{1:m}}(\vy_{1:m}) = \int\rho_{t_{1:n}}(\vy_{1:n})d\vy_{m+1:n}$.
\end{definition}

\begin{proposition}
\label{prop:consistency}
NDPs satisfy the consistency condition. 
\end{proposition}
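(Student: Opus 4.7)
The plan is to write the joint marginal $\rho_{t_{1:n}}(\vy_{1:n})$ explicitly as an integral against the latent variables $\vl(t_0)$ and $\vd$ using the generative process of Section \ref{sec:generative_process}, and then observe that marginalising the ``extra'' outputs $\vy_{m+1:n}$ reduces to integrating factors that each integrate to $1$, leaving precisely the smaller joint. Concretely, I start from
\begin{equation*}
\rho_{t_{1:n}}(\vy_{1:n}) = \int \int p(\vl(t_0)) \, p(\vd) \prod_{i=1}^{n} p\big(\vy_{i} \mid g(\vl(t_{i}), t_i)\big) \, d\vl(t_0) \, d\vd,
\end{equation*}
where each $\vl(t_i)$ is determined deterministically from $\vl(t_0)$ and $\vd$ by the latent Neural ODE in Equation \ref{eqn: latent_evolution}. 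The crucial structural observation is that, conditional on $\vl(t_0)$ and $\vd$, the outputs $\vy_1, \ldots, \vy_n$ are mutually independent, because each $\vy_i$ depends on the latents only through $\vl(t_i)$, which is itself a deterministic function of $(\vl(t_0), \vd, t_i)$.

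Next I would apply Tonelli's theorem (all integrands are non-negative densities) to exchange the order of integration and push the $d\vy_{m+1:n}$ integrals inside:
\begin{equation*}
\int \rho_{t_{1:n}}(\vy_{1:n}) \, d\vy_{m+1:n} = \int \int p(\vl(t_0)) \, p(\vd) \prod_{i=1}^{m} p\big(\vy_{i} \mid g(\vl(t_{i}), t_i)\big) \prod_{j=m+1}^{n} \left[\int p\big(\vy_{j} \mid g(\vl(t_{j}), t_j)\big) \, d\vy_j\right] d\vl(t_0) \, d\vd.
\end{equation*}
Since the decoder output distribution $p(\vy_j \mid g(\vl(t_j), t_j))$ is a bona fide probability density/mass (a Gaussian in the regression case or a product of Bernoullis in the discrete case, as specified in Section \ref{sec:generative_process}), each inner integral equals $1$. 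The remaining expression is exactly $\rho_{t_{1:m}}(\vy_{1:m})$, establishing consistency.

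I expect no serious obstacle: the only mildly delicate point is justifying the interchange of integrals, which follows from Tonelli's theorem applied to a product of non-negative measurable functions, and ensuring we are clear that $\vl(t_j)$ for $j > m$ depends only on the latents $(\vl(t_0), \vd)$ and not on the values $\vy_{m+1:n}$ being marginalised out; this is what makes the inner integrals factor and equal $1$. Note that consistency is proved here with respect to a fixed prior $p(\vl(t_0))p(\vd)$ over latents (i.e.\ in the absence of context, or equivalently with the context held fixed), which is the standard setting of the Kolmogorov extension theorem as applied to Neural Processes in \citet{garnelo2018neural}.
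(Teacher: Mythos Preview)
Your proof is correct and follows essentially the same approach as the paper. The paper writes the joint as an integral over the induced random function $\mathcal{F}$ (using Lemma~\ref{remark:t_only} to identify $\mathcal{F}(t)=g(\vl(t),t)$ as a deterministic function of $(\vl(t_0),\vd)$), whereas you integrate directly over the latents $(\vl(t_0),\vd)$; both then rely on the same conditional-independence factorisation so that the marginalised $\vy_j$-integrals each evaluate to $1$. Your version is slightly more explicit in justifying the interchange via Tonelli, but the argument is the same.
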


It is important to note that the stochasticity comes from sampling the latent $\vl(t_0)$ and $\vd$. There is no stochasticity within the ODE, such as Brownian motion, though stochastic ODEs have previously been explored \citep{liu2019neural, tzen2019neural, jia2020neural, li2020scalable}. For any given pair $\vl(t_0)$ and $\vd$, both the latent state trajectory and the observation space trajectory are fully determined. We also note that outside the NP family and differently from our approach, NODEs have been used to generate continuous stochastic processes by transforming the density of another latent process \citep{DengCBML20}. 



\subsection{Running Time Complexity}
\label{sec: runtime}
For a model with $n$ context points and $m$ target points, an NP has running time complexity $O(n+m)$, since the model only has to encode each context point and decode each target point. However, a Neural ODE Process has added complexity due to the integration process. Firstly, the integration itself has runtime complexity $O($NFE$)$, where NFE is the number of function evaluations. In turn, the worst-case NFE depends on the minimum step size $\delta$ the ODE solver has to use and the maximum integration time we are interested in, which we denote by $\Delta t_{\max}$. Secondly, for settings where the target times are not already ordered, an additional $O\big(m \log (m)\big)$ term is added for sorting them. This ordering is required by the ODE solver. 

Therefore, given that $m \geq n$ and assuming a constant $\Delta t_{\max}$ exists, the worst-case complexity of NDPs is $O\big(m\log(m)\big)$. For applications where the times are already sorted (e.g. real-time applications), the complexity falls back to the original $O\big(n + m\big)$. In either case, NDPs scale well with the size of the input. We note, however, that the integration steps $\nicefrac{\Delta t_{\max}}{\delta}$ could result in a very large constant, hidden by the big-$O$ notation. Nonetheless, modern ODE solvers use adaptive step sizes that adjust to the data that has been supplied and this should alleviate this problem.  In our experiments, when sorting is used, we notice the NDP models are between $1$ and $1.5$ orders of magnitude slower to train than NPs in terms of wall-clock time. At the same time, this limitation of the method is traded-off by a significantly faster loss decay per epoch and superior final performance. We provide a table of time ratios from our 1D experiments, from section \ref{sec: 1d}, in Appendix \ref{app: times}.

\begin{figure}[t]
    \centering
    \includegraphics[width=\textwidth]{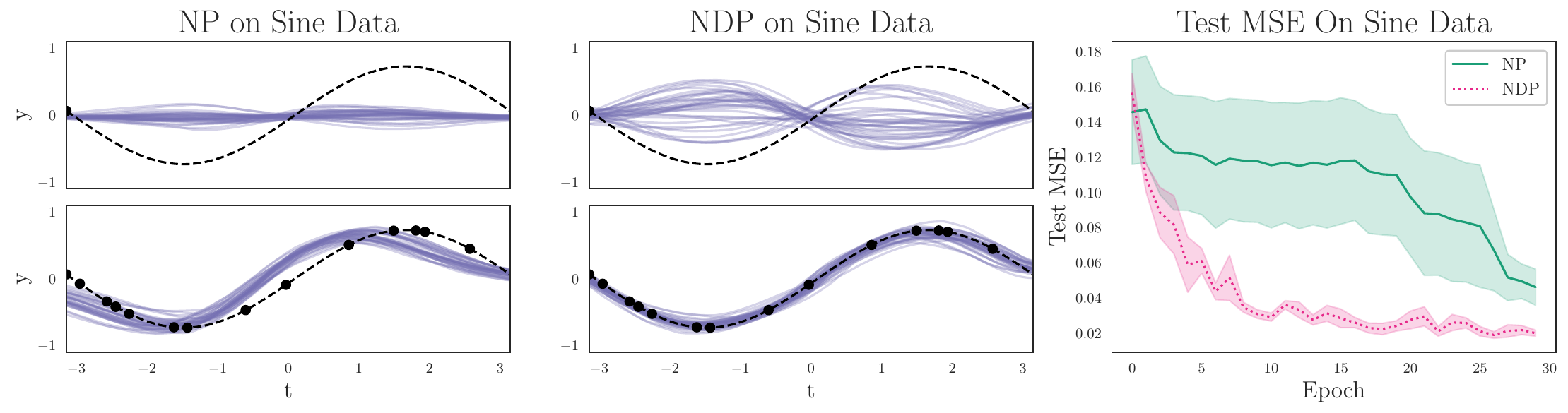}
    \caption{We present example posteriors of trained models and the loss during training of the NP and NDP models for the sine data. We find that NDPs are able to produce a greater range of functions when a single context point is provided, and a sharper, better targeted range as more points in the time series are observed. Quantitatively, NDPs train to a lower loss in fewer epochs, as may be expected for functions that are generated by ODEs. Both models were trained for 30 epochs.}
    \label{fig: 1d_main}
\end{figure}

\section{Experiments}

To test the proposed advantages of NDPs we carried out various experiments on time series data. For the low-dimensional experiments in Sections \ref{sec: 1d} and \ref{sec:LV}, we use an MLP architecture for the encoder and decoder. For the high-dimensional experiments in Section \ref{sec:rot_mnist}, we use a convolutional architecture for both. We train the models using RMSprop \citep{Tieleman2012} with learning rate $1\times10^{-3}$. Additional model and task details can be found in Appendices \ref{app:architectures} and \ref{app:task_details}, respectively. 

\subsection{One Dimensional Regression}
\label{sec: 1d}
We begin with a set of 1D regression tasks of differing complexity---sine waves, exponentials, straight lines and damped oscillators---that can be described by ODEs. For each task, the functions are determined by a set of parameters (amplitude, shift, etc) with pre-defined ranges. To generate the distribution over functions, we sample these parameters from a uniform distribution over their respective ranges. We use $490$ time-series for training and evaluate on $10$ separate test time-series. Each series contains 100 points. We repeat this procedure across $5$ different random seeds to compute the standard error. Additional details can be found in Appendix \ref{app: 1d}.

The left and middle panels of Figure \ref{fig: 1d_main} show how NPs and NDPs adapt on the sine task to incoming data points. When a single data-point has been supplied, NPs have incorrectly collapsed the distribution over functions to a set of almost horizontal lines. NDPs, on the other hand, are able to produce a wide range of possible trajectories. Even when a large number of points have been supplied, the NP posterior does not converge on a good fit, whereas NDPs correctly capture the true sine curve. In the right panel of Figure \ref{fig: 1d_main}, we show the test-set MSE as a function of the training epoch.  It can be seen that NDPs train in fewer iterations to a lower test loss despite having approximately 10\% fewer parameters than NPs. We conducted an ablation study, training all model variants on all the 1D datasets, with final test MSE losses provided in Table \ref{tab: final_losses} and training plots in Appendix \ref{app: 1d}.

We find that NDPs either strongly outperform NPs (sine, linear), or their standard errors overlap (exponential, oscillators). For the exponential and harmonic oscillator tasks, where the models perform similarly, many points are close to zero in each example and as such it is possible to achieve a low MSE score by producing outputs that are also around zero. In contrast, the sine and linear datasets have a significant variation in the $y$-values over the range, and we observe that NPs perform considerably worse than the NDP models on these tasks.

The difference between NDP and the best of the other model variants is not significant across the set of tasks. As such, we consider only NDPs for the remainder of the paper as this is the least constrained model version: they have unrestricted latent phase-space dynamics, unlike the second-order counterparts, and a more expressive decoder architecture, unlike the latent-only variants. In addition, NDPs train in a faster wall clock time than the other variants, as shown in Appendix \ref{app: times}.

\paragraph{Active Learning} We perform an active learning experiment on the sines dataset to evaluate both the uncertainty estimates produced by the models and how well they adapt to new information. Provided with an initial context point, additional points are greedily queried according to the model uncertainty. Higher quality uncertainty estimation and better adaptation will result in more information being acquired at each step, and therefore a faster and greater reduction in error. As shown in Figure \ref{fig:active_learning}, NDPs also perform better in this setting.

\begin{figure}[h]
    \centering
    \includegraphics[width=\textwidth]{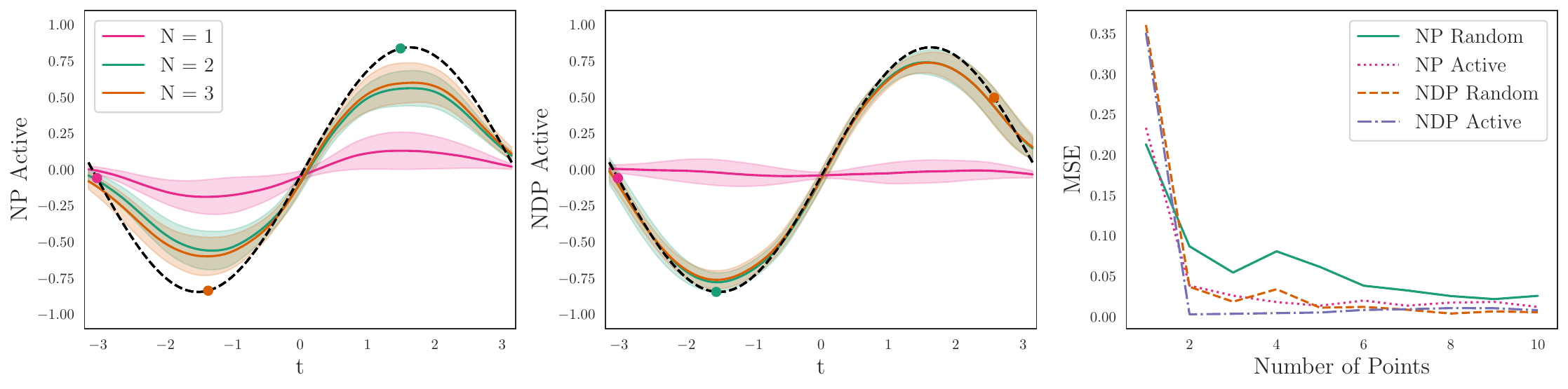}
    \caption{Active learning on the sines dataset. \textit{Left:} NPs querying the points of highest uncertainty. \textit{Middle:} NDPs querying the points of highest uncertainty, qualitatively it outperforms NPs. \textit{Right:} MSE plots of four different querying regimes, NPs and NDPs looking actively and randomly, NDP Active decreases the MSE the fastest.}
    \label{fig:active_learning}
\end{figure}

\begin{table}[t]
\caption{Final MSE Loss on 1D regression tasks with standard error (lower is better). \textbf{Bold} indicates that the model performance is within error at the 95\% confidence level, with \underline{underline} indicating the best estimate for top-performer. NPs perform similarly to NDPs and their variants on the exponential and oscillator tasks, where $y$-values are close to zero. For the sine and linear tasks, where $y$ values vary significantly over the time range, NPs perform worse than NDPs and their variants.}
\vspace{-8pt}
\label{tab: final_losses}
\begin{center}
\begin{tabular}{lcccr}
\toprule
& \multicolumn{4}{c}{MSE $\mathbf{\times 10^{-2}}$}
\\
\textbf{Model}  &\multicolumn{1}{c}{\bf Sine}  &\multicolumn{1}{c}{\bf Linear}  &\multicolumn{1}{c}{\bf Exponential}  &\multicolumn{1}{c}{\bf Oscillators}
\\ \midrule 
NP & 5.93 $\pm$ 0.96 & 5.85 $\pm$ 0.70 & \textbf{0.29 $\pm$ 0.03} & \textbf{0.64 $\pm$ 0.06}\\
\midrule
NDP & \underline{\textbf{2.09 $\pm$ 0.12}}  & \textbf{3.76 $\pm$ 0.32} & \textbf{0.31 $\pm$ 0.08} &
\textbf{0.72 $\pm$ 0.08}    \\
ND2P & 2.75 $\pm$ 0.19 & \textbf{4.37 $\pm$ 1.14} & \underline{\textbf{0.25 $\pm$ 0.04}}  &
\underline{\textbf{0.55 $\pm$ 0.03}}  \\
NDP-L & \textbf{2.51 $\pm$ 0.24}  & 4.77 $\pm$ 0.67 & 0.40 $\pm$ 0.04 & 
0.72 $\pm$ 0.04  \\
ND2P-L & \textbf{2.64 $\pm$ 0.30} & \underline{\textbf{3.16 $\pm$  0.46}} & 0.39 $\pm$ 0.05 & 0.66 $\pm$ 0.03 \\ \bottomrule
\end{tabular}
\end{center}
\vspace{-10pt}
\end{table}

\begin{figure}[h]
    \centering
    \includegraphics[trim = 5 5 0 0, clip=true, width=\textwidth]{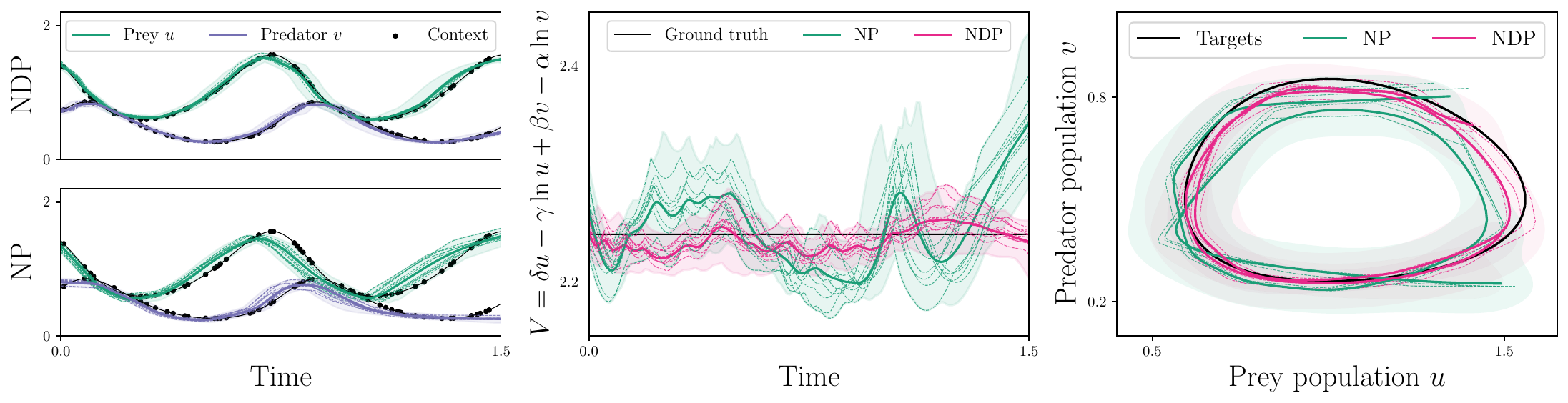}
    \caption{NPs and NDPs on the Lotka-Volterra task. Black is used for targets or ground truth, solid lines for mean predictions over 50 samples, and dashed lines for sample trajectories. In the \textit{left} and \textit{middle} plots, the shaded regions show the min-max range over 50 samples, in the \textit{right} plot the shaded region was produced using kernel density estimation. \textit{Left:} NPs are less able to model the dynamics, diverging from the ground truth even in regions with dense context sampling, whereas  the NDP is both more accurate and varies more appropriately. \textit{Middle:} Plotting the theoretically conserved quantity $V$ better exposes how the models deviate from the ground truth \textit{Right:} In phase space ($u,v$) the NDP is more clearly seen to better track the ground truth.}
    \label{fig: lv_main_plot}
\end{figure}

\subsection{Predator-Prey Dynamics}
\label{sec:LV}
The Lotka-Volterra Equations are used to model the dynamics of a two species system, where one species predates on the other. The populations of the prey, $u$, and the predator, $v$, are given by the differential equations $   \dot{u} = \alpha u - \beta uv, \dot{v} = \delta uv - \gamma v$, 
for positive real parameters, $\alpha, \beta, \delta, \gamma$. Intuitively, when prey is plentiful, the predator population increases ($+\delta uv$), and when there are many predators, the prey population falls ($-\beta uv$). The populations exhibit periodic behaviour, with the phase-space orbit determined by the conserved quantity $V = \delta u -\gamma \ln(u) + \beta v -\alpha \ln(v)$. Thus for any predator-prey system there exists a range of stable functions describing the dynamics, with any particular realisation being determined by the initial conditions, $(u_0,v_0)$. We consider the system $(\alpha,\beta,\gamma,\delta)=(\sfrac{2}{3},\sfrac{4}{3},1,1)$. 

We generate sample time-series from the Lotka Volterra system by considering different starting configurations; $(u_{0}, v_{0}) = (2E, E)$, where $E$ is sampled from a uniform distribution in the range (0.25, 1.0). The training set consists of 40 such samples, with a further 10 samples forming the test set. As before, each time series consists of 100 time samples and we evaluate across $5$ different random seeds to obtain a standard error.

We find that NDPs are able to train in fewer epochs to a lower loss (Appendix \ref{app: lv}). We record final test MSEs ($\times 10^{-2}$) at $44 \pm 4$ for the NPs and $15 \pm 2$ for the NDPs. As in the 1D tasks, NDPs perform better despite having a representation $\vr$ and context $\vz$ with lower dimensionality, leading to 10\% fewer parameters than NPs. Figure \ref{fig: lv_main_plot} presents these advantages for a single time series.

\subsection{Variable Rotating MNIST}
\label{sec:rot_mnist}

\begin{figure}[b]
    \vspace{-20pt}
    \centering
    \includegraphics[width=\textwidth]{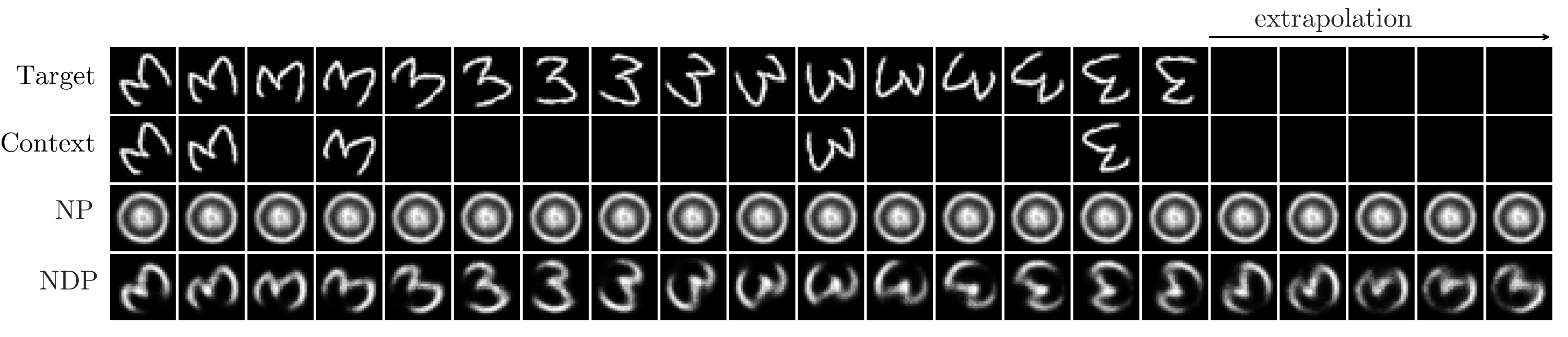}
    \caption{Predictions on the test set of Variable Rotating MNIST. NDP is able to extrapolate beyond the training time range whereas NP cannot even learn to reconstruct the digit.}
    \label{fig:vary_extrap}
\end{figure}

To test our model on high-dimensional time-series with latent dynamics, we consider the rotating MNIST digits \citep{Casale2018Gaussian, yldz2019ode2vae}.
In the original task, samples of digit ``3'' start upright and rotate once over 16 frames \((=360^\circ s^{-1})\) (i.e. constant angular velocity, zero angular shift). However, since we are interested in time-series with variable latent dynamics and increased variability in the initial conditions as in our formal problem statement, we consider a more challenging version of the task. In our adaptation, the angular velocity varies between samples in the range \((360^\circ \pm 60^\circ) s^{-1}\) and each sample starts at a random initial rotation. To induce some irregularity in each time-series in the training dataset, we remove five randomly chosen time-steps (excluding the initial time $t_0$) from each time-series. Overall, we generate a dataset with $1,000$ training time-series, $100$ validation time-series and $200$ test time-series, each using disjoint combinations of different calligraphic styles and dynamics. 
We compare NPs and NDPs using identical convolutional networks for encoding the images in the context. We assume that the initial image $y_0$ (i.e. the image at $t_0$) is always present in the context. As such, for NDPs, we compute the distribution of $L_0$ purely by encoding $y_0$ and disregarding the other samples in the context, as described in Section \ref{sec:ndp}. We train the NP for $500$ epochs and use the validation set error to checkpoint the best model for testing. We follow a similar procedure for the NDP model but, due to the additional computational load introduced by the integration operation, only train for $50$ epochs.

In Figure \ref{fig:vary_extrap}, we include the predictions offered by the two models on a time-series from the test dataset, which was not seen in training by either of the models. Despite the lower number of epochs they are trained for, NDPs are able to interpolate and even extrapolate on the variable velocity MNIST dataset, while also accurately capturing the calligraphic style of the digit. NPs struggle on this challenging task and are unable to produce anything resembling the digits. In order to better understand this wide performance gap, we also train in Appendix \ref{app:more_rot_mnist} the exact same models on the easier Rotating MNIST task from \citet{yldz2019ode2vae} where the angular velocity and initial rotation are constant. In this setting, the two models perform similarly since the NP model can rely on simple interpolations without learning any dynamics.  






\section{Discussion and Related Work}
\label{sec:discussion}

We now consider two perspectives on how Neural ODE Processes relate to existing work and discuss the model in these contexts.

\paragraph{NDPs as Neural Processes} From the perspective of stochastic processes, NDPs can be seen as a generalisation of NPs defined over time and, as such, existing improvements in this family are likely orthogonal to our own. For instance, following the work of \citet{kim2019attentive}, we would expect adding an attention mechanism to NDPs to reduce uncertainty around context points. Additionally, the intrinsic sequential nature of time could be further exploited to model a dynamically changing sequence of NDPs as in Sequential NPs \citep{singh2019sequential}. For application domains where the observations evolve on a graph structure, such as traffic networks, relational information could be exploited with message passing operations as in MPNPs \citep{day2020message}. 

\paragraph{NDPs as Neural ODEs} From a dynamics perspective, NDPs can be thought of as an amortised Bayesian Neural ODE. In this sense, ODE$^2$VAE \citep{yldz2019ode2vae} is the model that is most closely related to our method. While there are many common ideas between the two, significant differences exist. Firstly, NDPs do not use an explicit Bayesian Neural Network but are linked to them through the theoretical connections inherited from NPs \citep{garnelo2018neural}. NDPs handle uncertainty through latent variables, whereas ODE$^2$VAE uses a distribution over the NODE's weights. Secondly, NDPs stochastically condition the ODE derivative function and initial state on an arbitrary context set of variable size. In contrast, ODE$^2$VAE conditions only the initial position and initial velocity on the first element and the first $M$ elements in the sequence, respectively. Therefore, our model can dynamically adapt the dynamics to \textit{any} observed time points. From that point of view, our model also runs parallel to other attempts of making Neural ODEs capable of dynamically adapting to irregularly sampled data \citep{kidger2020neural, NEURIPS2019_455cb265}. We conclude this section by remarking that any Latent NODE, as originally defined in \citet{chen2018neural}, can also be seen as an NP. However, regular Latent NODEs are not trained with a conditional distribution objective, but rather maximise a variational lower-bound for $p(y_\T | t_\T)$. Simpler objectives like this have been studied in \citet{le2018empirical} and can be traced back to other probabilistic models \citep{edwards2016towards, Hewitt2018TheVH}. Additionally, Latent NODEs consider only an uncertainty in the initial position of the ODE, but do not consider an uncertainty in the derivative function. 

\section{Conclusion}

We introduce Neural ODE Processes (NDPs), a new class of stochastic processes suitable for modelling data-adaptive stochastic dynamics. First, NDPs tackle the two main problems faced by Neural ODEs applied to dynamics-governed time series: adaptability to incoming data points and uncertainty in the underlying dynamics when the data is sparse and, potentially, irregularly sampled. Second, they add an explicit treatment of time as an additional inductive bias inside Neural Processes. To do so, NDPs include a probabilistic ODE as an additional encoded structure, thereby incorporating the assumption that the time-series is the direct or latent manifestation of an underlying ODE. Furthermore, NDPs maintain the scalability of NPs to large inputs. We evaluate our model on synthetic 1D and 2D data, as well as higher-dimensional problems such as rotating MNIST digits. Our method exhibits superior training performance when compared with NPs, yielding a lower loss in fewer iterations. Whether or not the underlying ODE of the data is latent, we find that where there is a fundamental ODE governing the dynamics, NDPs perform well.

\newpage

\nocite{NEURIPS2019_9015}

\bibliography{iclr2021_conference}
\bibliographystyle{iclr2021_conference}

\newpage
\newpage
\appendix

\subsubsection*{Acknowledgements}
We'd like to thank C\u{a}t\u{a}lina Cangea and Nikola Simidjievski for their feedback on an earlier version of this work, and Felix Opolka for many discussions in this area. We were greatly enabled and are indebted to the developers of a great number of open-source projects, most notably the \href{https://github.com/rtqichen/torchdiffeq}{\texttt{torchdiffeq}} library. Jacob Moss is funded by a GlaxoSmithKline grant.


\section{Stochastic Process Proofs}
\label{app:sp_proofs}

Before giving the proofs, we state the following important Lemma. 
\begin{lemma}
\label{remark:t_only} 
As in NPs, the decoder output $g(\vl(t), t)$ can be seen as a function $\gF(t)$ for a given fixed $\vl(t_0)$ and $\vd$.
\end{lemma}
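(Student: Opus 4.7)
The plan is to argue that once $\vl(t_0)$ and $\vd$ are fixed, the remaining ingredients that determine $g(\vl(t), t)$ are all deterministic functions of $t$, so $g(\vl(t),t)$ can be viewed as a single-variable function of time.

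First I would fix a realisation of the latent initial condition $\vl(t_0)$ and of the global control $\vd$. Under this conditioning, the only remaining source of variation in the latent trajectory defined by Equation \ref{eqn: latent_evolution} is the time variable, since $f_\theta$ is a deterministic neural network, $\vd$ enters only as a constant parameter, and the integration starts from the fixed point $\vl(t_0)$. Thus the latent evolution reduces to the initial value problem
\begin{equation*}
\dot{\vl}(t) = f_\theta(\vl(t), \vd, t), \qquad \vl(t_0) \text{ fixed},
\end{equation*}
which, assuming $f_\theta$ is Lipschitz in its first argument and continuous in $t$ (a standard assumption for NODEs, satisfied by the smooth activation functions used in practice), admits a unique solution by the Picard-Lindel\"of theorem. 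Consequently, $\vl(t)$ is a well-defined deterministic function of $t$ alone, which I would denote $\tilde{\vl}(t)$ to emphasise this dependence.

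Next I would compose with the decoder. Since $g$ is a deterministic map and both of its arguments $\tilde{\vl}(t)$ and $t$ depend solely on $t$, the expression $g(\tilde{\vl}(t), t)$ defines a function of $t$. Setting $\gF(t) := g(\tilde{\vl}(t), t)$ gives the claimed representation. The same argument carries through without modification for the variants where the decoder additionally takes $\vd$ as input, since $\vd$ is fixed by assumption, and for the second-order variants ND2P/ND2P-L, where one simply invokes Picard-Lindel\"of on the augmented state $(\vl_1, \vl_2)$.

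The only subtlety worth flagging is the regularity of $f_\theta$: strictly speaking, uniqueness of the ODE solution requires local Lipschitz continuity, and one must also ensure the solution does not blow up on the relevant time interval so that $\tilde{\vl}(t)$ is defined for all target times of interest. Both conditions are standard and mild in the NODE setting, so I would state them as a background assumption rather than dwell on them; this is the main (and essentially the only) place where care is needed in an otherwise straightforward conditioning argument.
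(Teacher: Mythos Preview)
Your proposal is correct and follows the same approach as the paper: fix $\vl(t_0)$ and $\vd$, observe that the ODE integration in Equation~\ref{eqn: latent_evolution} then yields $\vl(t)$ as a deterministic function of $t$, and compose with the deterministic decoder. The paper's proof is essentially a one-line version of your argument, omitting the explicit appeal to Picard--Lindel\"of and the regularity caveats you (reasonably) flag.
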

\begin{proof}
This follows directly from the fact that $\vl(t) = \vl(t_{0}) + \int_{t_{0}}^\T f_\theta(\vl(t), t, \vd)dt$ can be seen as a function of $t$ and that the integration process is deterministic for a given pair $\vl(t_0)$ and $\vd$ (i.e. for fixed initial conditions and control). 
\end{proof}

\textbf{Proposition \ref{prop:exhangeability}} \textit{NDPs satisfy the exchangeability condition.}

\begin{proof}
This follows directly from Lemma \ref{remark:t_only}, since any permutation on $t_{1:n}$ would automatically act on $\gF_{1:n}$ and consequently on $p(\vy_{1:n}, \vl(t_0), \vd | t_{1:n})$, for any given $\vl(t_0), \vd$.
\end{proof}

\textbf{Proposition \ref{prop:consistency}} \textit{NDPs satisfy the consistency condition.}

\begin{proof}
Based on Lemma \ref{remark:t_only} we can write the joint distribution (similarly to a regular NP) as follows:
\begin{equation}
    \rho_{t_{1:n}}(y_{1:n}) = \int p(\mathcal{F})\prod_{i=1}^{n}p(\vy_{i} \vert \mathcal{F}(t_{i}))d\mathcal{F}.
\end{equation}
Because the density of any $\vy_i$ depends only on the corresponding $t_i$, integrating out any subset of $\vy_{1:n}$ gives the joint distribution of the remaining random variables in the sequence. Thus, consistency is guaranteed. 
\end{proof}

\section{ELBO Derivation}
\label{app:elbo_proof}

As noted in Lemma \ref{remark:t_only}, the joint probability $p(\vy, \vl(t_0), \vd | t) = p(\vl(t_0)) p(\vd) p(\vy \vert g(\vl(t), \vd, t))$ can still be seen as a function that depends only on $t$, since the ODE integration process is deterministic for a given $\vl(t_0)$ and $\vd$. Therefore, the ELBO derivation proceeds as usual \citep{garnelo2018neural}. For convenience, let $\vz = (\vl(t_0), \vd)$ denote the concatenation of the two latent vectors and $q(\vz) = q_L(\vl(t_0)) q_D(\vd)$. First, we derive the ELBO for $ \log p(y_\T | t_\T)$. 
\begin{align}
    \log p(y_\T | t_\T) &= \KL \big( q(\vz| t_\T, y_\T) \Vert p(\vz| t_\T, y_\T)\big) + \gL_{\text{ELBO}} \\
    &\geq \gL_\text{ELBO} = \E_{q(\vz| t_\T, y_\T)}\big[ - \log q(\vz| t_\T, y_\T) + \log p(y_\T, \vz | t_\T) \big] \\
    &= - \E_{q(\vz| t_\T, y_\T)} \log q(\vz| t_\T, y_\T) 
    + \E_{q(\vz| t_\T, y_\T)} \big[\log p(\vz) + \log p(y_\T | t_\T, \vz)\big] \\
    &= \E_{q(\vz| t_\T, y_\T)} \Bigg[\sum_{i \in I_\T} \log p(\vy_i | \vz, t_i) + \log \frac{p(\vz)}{q(\vz|t_\T, y_\T)} \Bigg]
\end{align}
Noting that at training time, we want to maximise $\log p(y_\T | t_\T, y_\C)$. Using the derivation above, we obtain a similar lower-bound, but with a new prior $p(z|t_\C, y_\C)$, updated to reflect the additional information supplied by the context.  
\begin{align}
    \log p(y_\T | t_\T, y_\C) \geq  \E_{q(\vz|t_\T, y_\T)}\Bigg[\sum_{i \in I_\T} \log p(\vy_i | \vz, t_i) + \log \frac{p(\vz|t_\C, y_\C)}{q(\vz|t_\T, y_\T)} \Bigg]
\end{align}
If we approximate the true $p(z|t_\C, y_\C)$ with the variational posterior, this takes the final form 
\begin{align}
    \log p(y_\T | t_\T, y_\C) \geq  \E_{q(\vz|t_\T, y_\T)}\Bigg[\sum_{i \in I_\T} \log p(\vy_i | \vz, t_i) + \log \frac{q(\vz|t_\C, y_\C)}{q(\vz|t_\T, y_\T)} \Bigg]
\end{align}
Splitting $\vz = (\vl(t_0), \vd)$ back into its constituent parts, we obtain the loss function 
\begin{align}
     \E_{q\big(\vl(t_0), \vd|t_\T, y_\T \big)}\Bigg[\sum_{i \in I_\T} \log p(\vy_i | \vl(t_0), \vd, t_i) + \log \frac{q_L(\vl(t_0)|t_\C, y_\C)}{q_L(\vl(t_0) \vert t_\T, y_\T)} +
    \log \frac{q_D(\vd \vert t_\C, y_\C)}{q_D(\vd|t_\T, y_\T)}
    \Bigg].
\end{align}

\section{Learning and Inference Procedure}
\label{app:pseudocode}

We include below the pseudocode for training NDPs. For clarity of exposition, we give code for a single time-series. However, in practice, we batch all the operations in lines $6-15$. 

\begin{algorithm}[h]
\SetKwInOut{Input}{Input}\SetKwInOut{Output}{Output}
\Input{A dataset of time-series $\{\mX_k\}$, $k \leq K$, where $K$ is the total number of time-series }
Initialise NDP model with parameters $\theta$ \\
Let $m$ be the number of context points and $n$ the number of extra target points \\
\For{$i\gets0$ \KwTo training\_steps}{
    Sample $m$ from U[$min\_context\_points$, $max\_context\_points$] \\
    Sample $n$ from U[$min\_extra\_target\_points$, $max\_extra\_target\_points$] \\
    Uniformly sample a time-series $\mX_k$ \\
    Uniformly sample from $\mX_k$ the target points $\T = (t_\T, y_\T)$, where $t_\T$ is the time batch with shape ($m + n$, $1$) and $y_\T$ is the corresponding outputs batch with shape ($m + n$, $\dim(\vy)$) \\
    Extract the (unordered) context set $\C = \T[0:m]$ \\
    Compute $q(\vl(t_0), \vd | \C)$ using the variational encoder \\
    Compute $q(\vl(t_0), \vd | \T)$ using the variational encoder \\
    \tcp{During training, we sample from $q(\vl(t_0), \vd | \T)$}
    Sample $\vl(t_0), \vd$ from $q(\vl(t_0), \vd | \T)$ \\
    Integrate to compute $\vl(t)$ as in Equation \ref{eqn: latent_evolution} for all times $t \in t_\T$ \\ \ForEach{time $t \in t_\T$}{
        Use decoder to compute $p(\vy(t) | g(\vl(t)), t)$
    }
    Compute loss $\gL_{\text{ELBO}}$ based on Equation \ref{eq:loss_fn} \\
    $\theta \xleftarrow{} \theta - \alpha \nabla_\theta \gL_{\text{ELBO}}$ \\
} 

\caption{Learning and Inference in Neural ODE Processes}
\label{alg:psuedocode}
\end{algorithm}

It is worth highlighting that during training we sample $\vl(t_0), \vd$ from the target-conditioned posterior, rather than the context-conditioned posterior. In contrast, at inference time we sample from the context-conditioned posterior. 

\section{Wall Clock Training Times}
\label{app: times}
To explore the additional term in the runtime given in Section \ref{sec: runtime}, we record the wall clock time for each model to train for 30 epochs on the 1D synthetic datasets, over 5 seeds. Then we take the ratio of a given model and the NP. The experiments were run on an \textit{Nvidia Titan XP}. The results can be seen in Table \ref{tab: times}.

\begin{table}[h]
\caption{Table of ratios, of Neural ODE Process and Neural Process training times on different 1D synthetic datasets. We see that NDP/NP is the lowest (i.e. fastest) in each case.}
\label{tab: times}
\begin{center}
\begin{tabular}{lllll}
\toprule
\multicolumn{1}{c}{\bf Time Ratios}  &\multicolumn{1}{c}{\bf Sine}  &\multicolumn{1}{c}{\bf Exponential}  &\multicolumn{1}{c}{\bf Linear}  &\multicolumn{1}{c}{\bf Oscillators}
\\
\midrule
NDP/NP      & \bf{22.1 $\pm$ 0.9}  & \bf{23.6 $\pm$ 0.9} & \bf{10.9 $\pm$ 1.4} & \bf{22.2 $\pm$ 2.3} \\
ND2P/NP     &  55.2 $\pm$ 6.3 & 32.4 $\pm$ 1.5 & 14.2 $\pm$ 0.3 & 35.8 $\pm$ 0.7\\
NDP-L/NP     & 55.2 $\pm$ 6.2 & 47.5 $\pm$ 18.0 & 14.7 $\pm$ 1.5 & 25.3 $\pm$ 0.5\\
ND2P-L/NP    & 43.7 $\pm$ 1.9 & 27.9 $\pm$ 1.1 & 15.1 $\pm$ 1.6 & 32.8 $\pm$ 1.1 \\
\midrule
NP Training Time /s & 22.4 $\pm$ 0.2 & 45.5 $\pm$ 0.3 & 100.9 $\pm$ 0.3 & 23.2 $\pm$ 0.4 \\ \bottomrule
\end{tabular}
\end{center}
\end{table}

\section{Size of Latent ODE}
\label{app:dim_l_ablation}

To investigate how many dimensions the ODE $\vl$ should have, we carry out an ablation study, looking at the performance on the 1D sine dataset. We train models with $\vl$-dimension $\{1,2,5,10,15,20\}$ for 30 epochs. Figure \ref{fig:dim_l_ablation} shows training plots for $\textup{dim}(\vl)=\{1, 2, 10, 20\}$, and final MSE values are given in Table \ref{tab: dim_l_ablation}.

\begin{figure}[h]
    \centering
    \includegraphics[width=\textwidth]{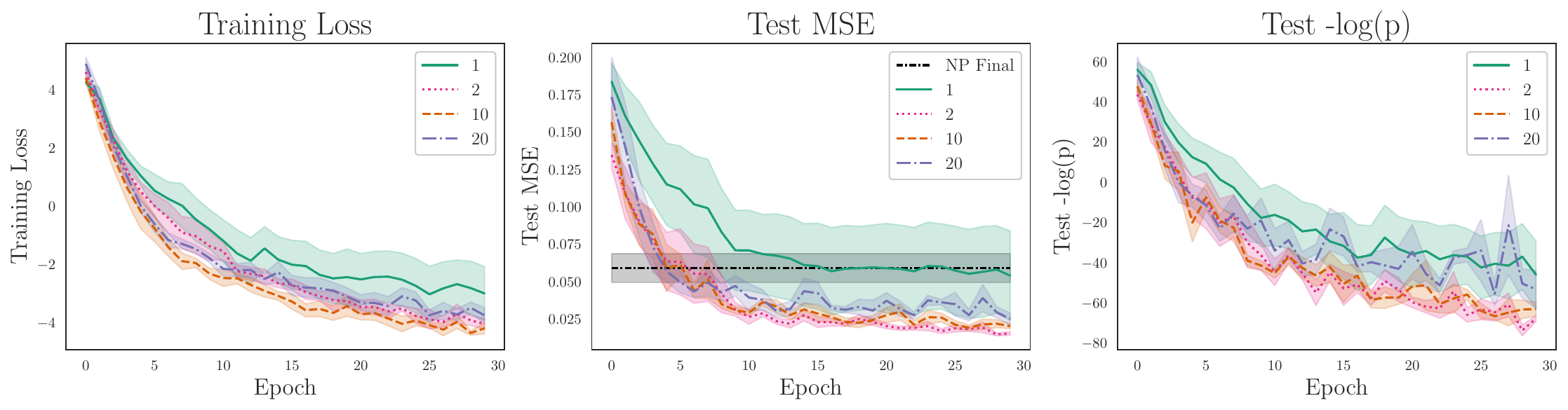}
    \caption{Training plots of NDP with ODEs of different sizes, training on the sine dataset. We see that for $\textup{dim}(\vl)=1$, the model trains slowly, as would be expected for a sine curve where at least 2 dimensions are needed to learn second order and test performance is close to the standard NP. The other models train at approximately the same rate.}
    \label{fig:dim_l_ablation}
\end{figure}

\begin{table}[h]
\caption{Final MSE values for NDPs training on the sine dataset with different sized ODEs, with NP performance included for reference. Peak performance is found when $\textup{dim}(\vl)=2$, which is to be expected as the true dynamics are 2-dimensional. For $\textup{dim}(\vl)=1$, the MSE is highest, and within error of the NP. Performance degrades with increasing $\textup{dim}(\vl)$, with overfitting becoming a problem for $\textup{dim}(\vl)=20$.}
\label{tab: dim_l_ablation}
    \centering
    \begin{tabular}{c|c|c}
        \toprule
        \bf $\vl$-dimension &  \bf MSE $\mathbf{\times 10^{-2}}$
        & \bf Training Times /s\\
        \midrule
        \it NP & \it5.9 $\pm$ \it0.9 & \it22.4 $\pm$ \it0.2\\
        \midrule
        1 & 5.6 $\pm$ 1.3 & \textbf{299.7 $\pm$ 20.5}\\
        2 & \textbf{1.7 $\pm$ 0.1} & 413.8 $\pm$ 52.9\\
        5 & 2.2 $\pm$ 0.2 & 414.8 $\pm$ 13.1\\
        10 & 2.1 $\pm$ 0.1 & 496.7 $\pm$ 20.5\\
        15 & 2.6 $\pm$ 0.2 & 618.0 $\pm$ 30.7\\
        20 & 3.1 $\pm$ 0.3 & 652.0 $\pm$ 38.8\\
        \bottomrule
    \end{tabular}
    \label{tab:my_label}
\end{table}

We see that when $\textup{dim}(\vl)=1$, NDPs are slow to train and require more epochs. This is because sine curves are second-order ODEs, and at least two dimensions are required to learn second-order dynamics (one for the position and one for the velocity). When $\textup{dim}(\vl)=1$, NDPs perform similarly to NPs, which is expected when the latent ODE is unable to capture the underlying dynamics. We then see that for all other dimensions, NDPs train at approximately the same rate (over epochs) and have similar final MSE scores. As the dimension increases beyond 10, the test MSE increases, indicating overfitting.

\section{Architectural Details}
\label{app:architectures}
For the experiments with low dimensionality (1D, 2D), the architectural details are as follows:

\begin{itemize}
    \item \textbf{Encoder}: $[t_{i}, y_{i}]\xrightarrow{}\vr_{i}$: Multilayer Perceptron, 2 hidden layers, ReLU activations.
    \item \textbf{Aggregator}: $\vr_{1:n} \xrightarrow[]{} \vr$: Taking the mean.
    \item \textbf{Representation to Hidden}: $\vr \xrightarrow[]{} \vh$: One linear layer followed by ReLU.
    \item \textbf{Hidden to $L_0$ Mean}: $\vh \xrightarrow[]{} \mu_{L}$: One linear layer.
    \item \textbf{Hidden to $L_0$ Variance}: $\vh \xrightarrow[]{} \sigma_{L}$: One linear layer, followed by sigmoid, multiplied by 0.9 add 0.1, i.e. $\sigma_{L} = 0.1 + 0.9\times\text{sigmoid}(\mW\vh + \vb)$.
    \item \textbf{Hidden to $D$ Mean}: $\vh \xrightarrow[]{} \mu_{D}$: One linear layer.
    \item \textbf{Hidden to $D$ Variance}: $\vh \xrightarrow[]{} \sigma_{D}$: One linear layer, followed by sigmoid, multiplied by 0.9 add 0.1, i.e. $\sigma_{D} = 0.1 + 0.9\times\text{sigmoid}(\mW\vh + \vb)$.
    \item \textbf{ODE Layers}: $[\vl, \vd, t] \xrightarrow[]{} \dot{\vl}$: Multilayer Perceptron, two hidden layers, $\tanh$ activations.
    \item \textbf{Decoder}: $g(\vl(t^\T_{i}), \vd, t^\T_{i}) \xrightarrow[]{} y^\T_{i}$, for the NDP model and ND2P described in section \ref{sec: model_variations}, this function is a linear layer, acting on a concatenation of the latent state and a function of $\vl(t^\T_{i})$, $\vd$, and $t^\T_{i}$. $g(\vl(t^\T_{i}), \vd, t^\T_{i}) = \mW (\vl(t^\T_{i})||h(\vl(t^\T_{i}), \vd, t^\T_{i}))+\vb$. Where $h$ is a
    Multilayer Perceptron with two hidden layers and ReLU activations.
\end{itemize}

For the high-dimensional experiments (Rotating MNIST). 

\begin{itemize}
    \item \textbf{Encoder}: $[t_{i}, y_{i}]\xrightarrow{}\vr_{i}$: Convolutional Neural Network, 4 layers with 16, 32, 64, 128 channels respectively and kernel size of 5, stride 2. ReLU activations. Batch normalisation. 
    \item \textbf{Aggregator}: $\vr_{1:n} \xrightarrow[]{} \vr$: Taking the mean.
    \item \textbf{Representation to $D$ Hidden}: $\vr \xrightarrow[]{} \vh_D$: One linear layer followed by ReLU.
    \item \textbf{$D$ Hidden to $D$ Mean}: $\vh_D \xrightarrow[]{} \mu_{D}$: One linear layer.
    \item \textbf{$D$ Hidden to $D$ Variance}: $\vh_D \xrightarrow[]{} \sigma_{D}$: One linear layer, followed by sigmoid, multiplied by 0.9 add 0.1, i.e. $\sigma_{D} = 0.1 + 0.9\times\text{sigmoid}(\mW\vh_D + \vb)$.
    \item \textbf{$\vy_0$ to $L_0$ Hidden}: $\vy_0 \xrightarrow[]{} \vh_L$: Convolutional Neural Network, 4 layers with 16, 32, 64, 128 channels respectively and kernel size of 5, stride 2. ReLU activations. Batch normalisation. 
    \item \textbf{$L_0$ Hidden to $L_0$ Mean}: $\vh_L \xrightarrow[]{} \mu_{L}$: One linear layer.
    \item \textbf{$L_0$ Hidden to $L_0$ Variance}: $\vh_L \xrightarrow[]{} \sigma_{L}$: One linear layer, followed by sigmoid, multiplied by 0.9 add 0.1, i.e. $\sigma_{L} = 0.1 + 0.9\times\text{sigmoid}(\mW\vh_L + \vb)$.
    \item \textbf{ODE Layers}: $[\vl, \vd, t] \xrightarrow[]{} \dot{\vl}$: Multilayer Perceptron, two hidden layers, $\tanh$ activations.
    \item \textbf{Decoder}: $g(\vl(t^\T_{i})) \xrightarrow[]{} y^\T_{i}$: 1 linear layer followed by a 4 layer transposed Convolutional Neural Network  with 32, 128, 64, 32 channels respectively. ReLU activations. Batch normalisation.
\end{itemize}

\section{Task Details and Additional Results}
\label{app:task_details}

\subsection{One Dimensional Regression}
\label{app: 1d}
We carried out an ablation study over model variations on various 1D synthetic tasks---sines, exponentials, straight lines and harmonic oscillators. Each task is based on some function described by a set of parameters that are sampled over to produce a distribution over functions. In every case, the parameters are sampled from uniform distributions. A trajectory example is formed by sampling from the parameter distributions and then sampling from that function at evenly spaced timestamps, $t$, over a fixed range to produce 100 data points $(t,y)$. We give the equations for these tasks in terms of their defining parameters and the ranges for these parameters in Table \ref{tab:1dtask_details}.

\begin{table}[h]
    \centering
    \begin{adjustbox}{width=1.0\textwidth}
    \begin{tabular}{lcccc|cr}
    \toprule
    \textbf{Task}    & \textbf{Form} & $\mathbf{a}$ & $\mathbf{b}$ & $\mathbf{t}$ & \textbf{\# train} & \textbf{\# test} \\
    \midrule
    Sines   & $y=a \sin(t-b)$ & $(-1,1)$ & $(\sfrac{-1}{2},\sfrac{1}{2})$ & $(-\pi,\pi)$ & 490 & 10 \\
    Exponentials & $y = \sfrac{a}{60}\times\exp(t-b)$ & $(-1,1)$ & $(\sfrac{-1}{2},\sfrac{1}{2})$ & $(-1,4)$ & 490 & 10 \\
    Straight lines & $y=at+b$ & $(-1,1)$ & $(\sfrac{-1}{2},\sfrac{1}{2})$ & $(0,5)$ & 490 & 10 \\
    Oscillators & $y = a\sin(t-b)\exp(\sfrac{-t}{2})$ & $(-1,1)$ & $(\sfrac{-1}{2},\sfrac{1}{2})$ & $(0,5)$ & 490 & 10 \\
    \bottomrule
    \end{tabular}
    \end{adjustbox}
    \caption{Task details for 1D regression. $a$ and $b$ are sampled uniformly at random from the given ranges. $t$ is sampled at 100 regularly spaced intervals over the given range. 490 training examples and 10 test examples were used in every case.}
    \label{tab:1dtask_details}
\end{table}


To test after each epoch, 10 random context points are taken, and then the mean-squared error and negative log probability are calculated over all the points (not just a subset of the target points). Each model was trained 5 times on each dataset (with different weight initialisation). We used a batch size of 5, with context size ranging from 1 to 10, and the extra target size ranging from 0 to 5.\footnote{As written in the problem statement in section \ref{sec:problem_statement}, we make the context set a subset of the target set when training. So we define a context size range and an extra target size range for each task.} The results are presented in Figure \ref{fig: 1d_full}.

\begin{figure}[h]
    \centering
    \includegraphics[width=\textwidth]{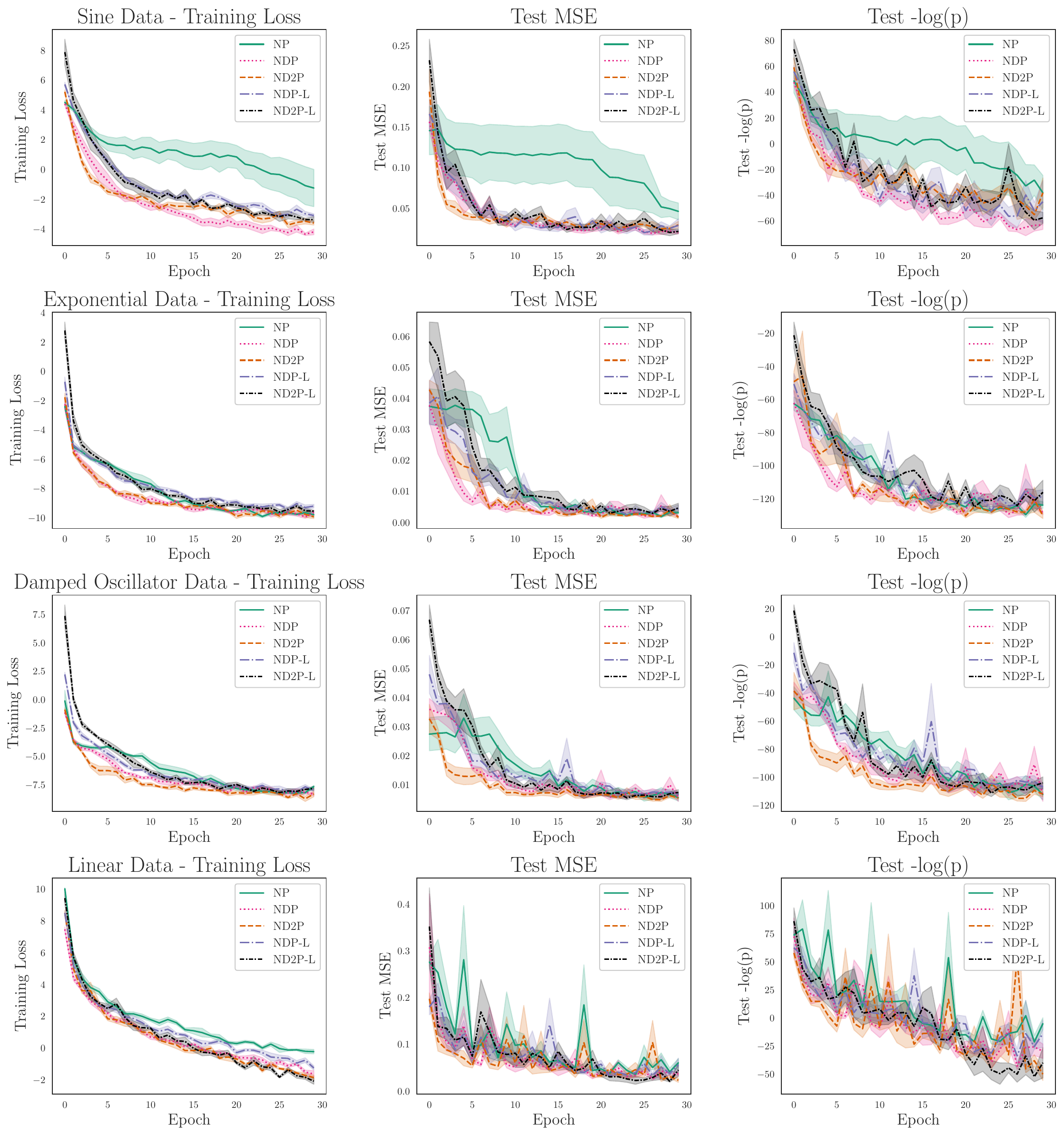}
    \caption{Training model variants on 1D synthetic datasets. NPs train slower in all cases. All Neural ODE Process variants train approximately at the same rate. With the latent-only variants performing slightly worse than the more expressive model variants. Additionally, ND2P performs slightly better than NDP on the damped oscillator and linear sets, because they are naturally easier to learn as second-order ODEs.}
    \label{fig: 1d_full}
\end{figure}

All models perform better than NPs, with fewer parameters (approximately 10\% less). Because there are no significant differences between the different models, we use NDP in the remainder of the experiments, because it has the fewest model restrictions. The phase space dynamics are not restricted like its second-order variant, and the decoder has a more expressive architecture than the latent-only variants. It also trains the fastest in wall clock time seen in Appendix \ref{app: times}.

\subsection{Lotka-Volterra System}
\label{app: lv}

To generate samples from the Lotka Volterra system, we sample different starting configurations, $(u_{0}, v_{0}) = (2E, E)$, where $E$ is sampled from a uniform distribution in the range (0.25, 1.0). We then evolve the Lotka Volterra system

\begin{equation}
\label{eqn:lotka-volterra_appendix}
    \frac{du}{dt} = \alpha u - \beta uv, \qquad \qquad
    \frac{dv}{dt} = \delta uv - \gamma v.
\end{equation}

using $(\alpha,\beta,\gamma,\delta)=(\sfrac{2}{3},\sfrac{4}{3},1,1)$. This is evolved from $t=0$ to $t=15$ and then the times are rescaled by dividing by 10.

The training for the Lotka-Volterra system can be seen in Figure \ref{fig: lv_loss_plots}. This was taken across 5 seeds, with a training set of 40 trajectories, 10 test trajectories and batch size 5. We use a context size ranging from 1 to 100, and extra target size ranging from 0 to 45. The test context size was fixed at 90 query times. NDPs train slightly faster with lower loss, as expected.

\begin{figure}[h]
    \centering
    \includegraphics[width=\textwidth]{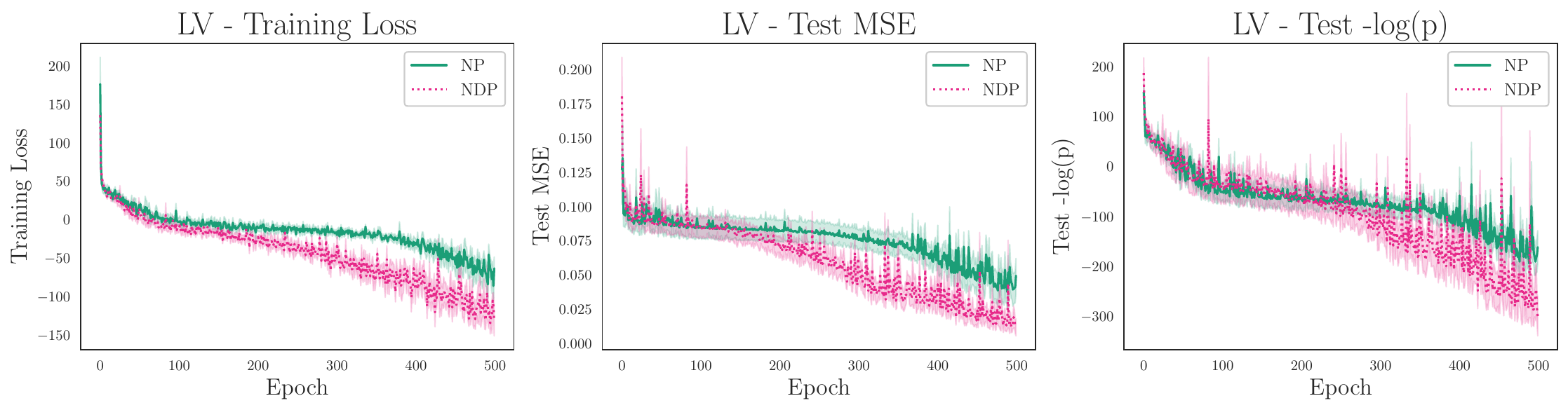}
    \caption{Training NP and NDP on the Lotka-Volterra equations. Due to the additional encoding structure of NDP, it can be seen that NDPs train in fewer iterations, to a lower loss than NPs.}
    \label{fig: lv_loss_plots}
\end{figure}

\subsection{Rotating MNIST \& Additional Results}
\label{app:more_rot_mnist}

To better understand what makes vanilla NPs fail on our Variable Rotating MNIST from Section \ref{sec:rot_mnist}, we train the exact same models on the simpler Rotating MNIST dataset \citep{yldz2019ode2vae}. In this dataset, all digits start in the same position and rotate with constant velocity. Additionally, the fourth rotation is removed from all the time-series in the training dataset. We follow the same training procedure as in Section \ref{sec:rot_mnist}. 

\begin{figure}[h]
    \centering
    \includegraphics[width=\textwidth]{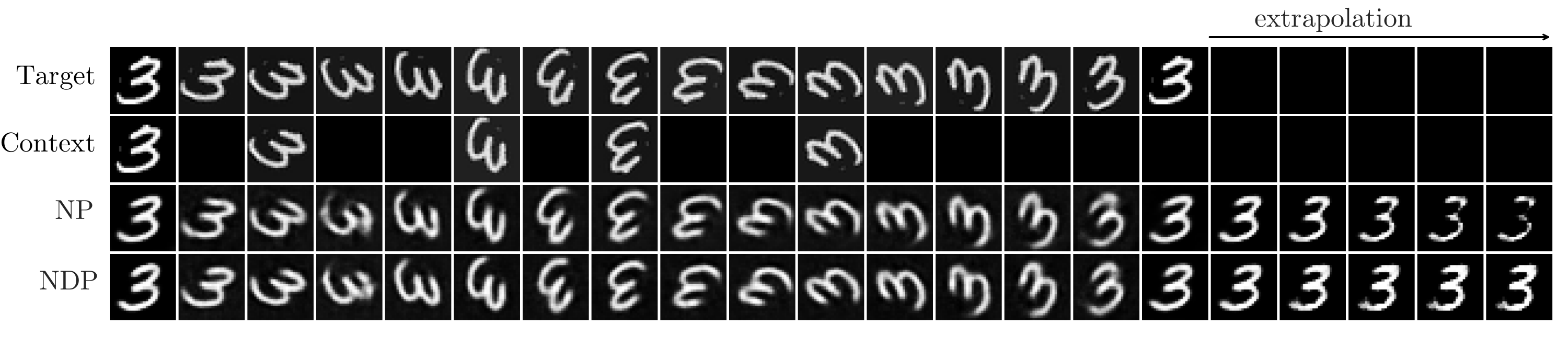}
    \caption{Predictions on the simpler Rotating MNIST dataset. NPs are also able to perform well on this task, but NDPs are not able to extrapolate beyond the maximum training time.}
    \label{fig:4plot_rot_mnist}
\end{figure}

We report in Figure \ref{fig:4plot_rot_mnist} the predictions for the two models on a random time-series from the validation dataset. First, NPs and NDPs perform similarly well at interpolation and extrapolation within the time-interval used in training. As an exception but in agreement with the results from ODE$^2$VAE, NDPs produces a slightly better reconstruction for the fourth time step in the time-series. Second, neither model is able to extrapolate the dynamics beyond the time-range seen in training (i.e. the last five time-steps). 

Overall, these observations suggest that for the simpler RotMNIST dataset, explicit modelling of the dynamics is not necessary and the tasks can be learnt easily by interpolating between the context points. And indeed, it seems that even NDPs, which should be able to learn solutions that extrapolate, collapse on these simpler solutions present in the parameter space, instead of properly learning the desired latent dynamics. A possible explanation is that the Variable Rotating MNIST dataset can be seen as an image augmentation process which makes the convolutional features to be approximately rotation equivariant. In this way, the NDP can also learn rotation dynamics in the spatial dimensions of the convolutional features.      

Finally, in Figure \ref{fig: rotnist_styles}, we plot the reconstructions of different digit styles on the test dataset of Variable Rotating MNIST. This confirms that NDPs are able to capture different calligraphic styles. 

\begin{figure}[h]
    \centering 
    \includegraphics[width=0.8\textwidth]{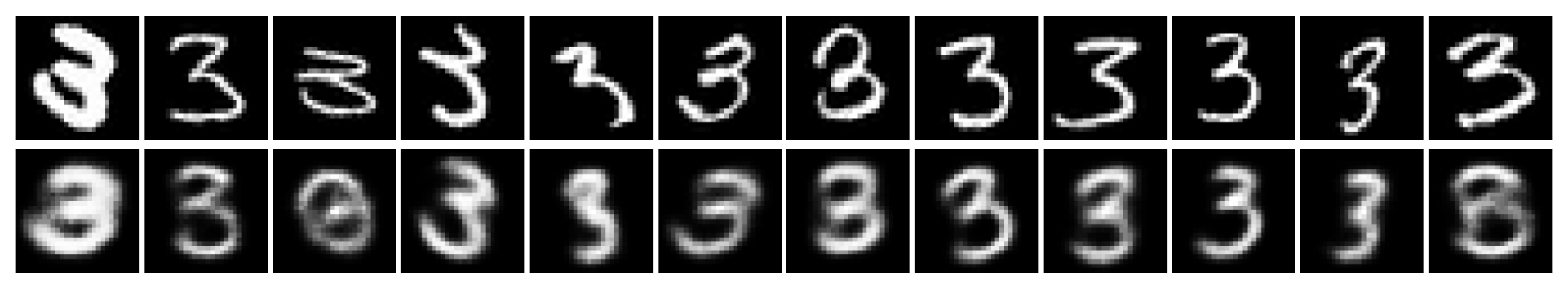}
    \caption{NDPs are able to capture different styles in the Variable Rotating MNIST dataset.}
    \label{fig: rotnist_styles}
\end{figure}

\subsection{Handwritten Characters}
The \texttt{CharacterTrajectories} dataset consists of single-stroke handwritten digits recorded using an electronic tablet \citep{williams2006extracting,Dua2019}. The trajectories of the pen tip in two dimensions, $(x,y)$, are of varying length, with a force cut-off used to determine the start and end of a stroke. We consider a reduced dataset, containing only letters that were written in a single stroke, this disregards letters such as ``f'', ``i'' and ``t''. Whilst it is not obvious that character trajectories should follow an ODE, the related Neural Controlled Differential Equation (NCDEs) model has been applied successfully to this task \citep{kidger2020neural}. We train with a training set with 49600 examples, a test set with 400 examples and a batch size of 200. We use a context size ranging between 1 and 100, an extra target size ranging between 0 and 100 and a fixed test context size of 20. We visualise the training of the models in Figure \ref{fig: handwritten_loss_plots} and the models plotting posteriors in Figure \ref{fig: handwriting_posterior}.


\begin{figure}[h]
    \centering
    \includegraphics[width=\textwidth]{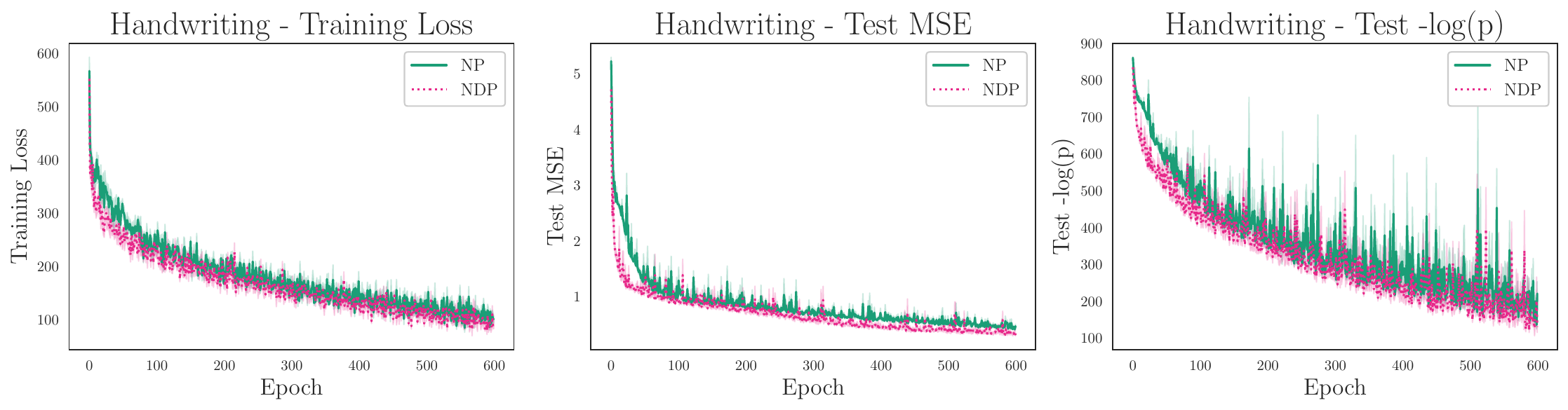}
    \caption{NPs and NDPs training on handwriting. NDPs perform slightly better, achieving a lower loss in fewer iterations. However this is a marginal improvement, and we believe it is down to significant diversity in the dataset, due to there being no fundamental differential equation for handwriting.}
    \label{fig: handwritten_loss_plots}
\end{figure}

\begin{figure}[h]
    \centering
    \includegraphics[width=0.8\textwidth]{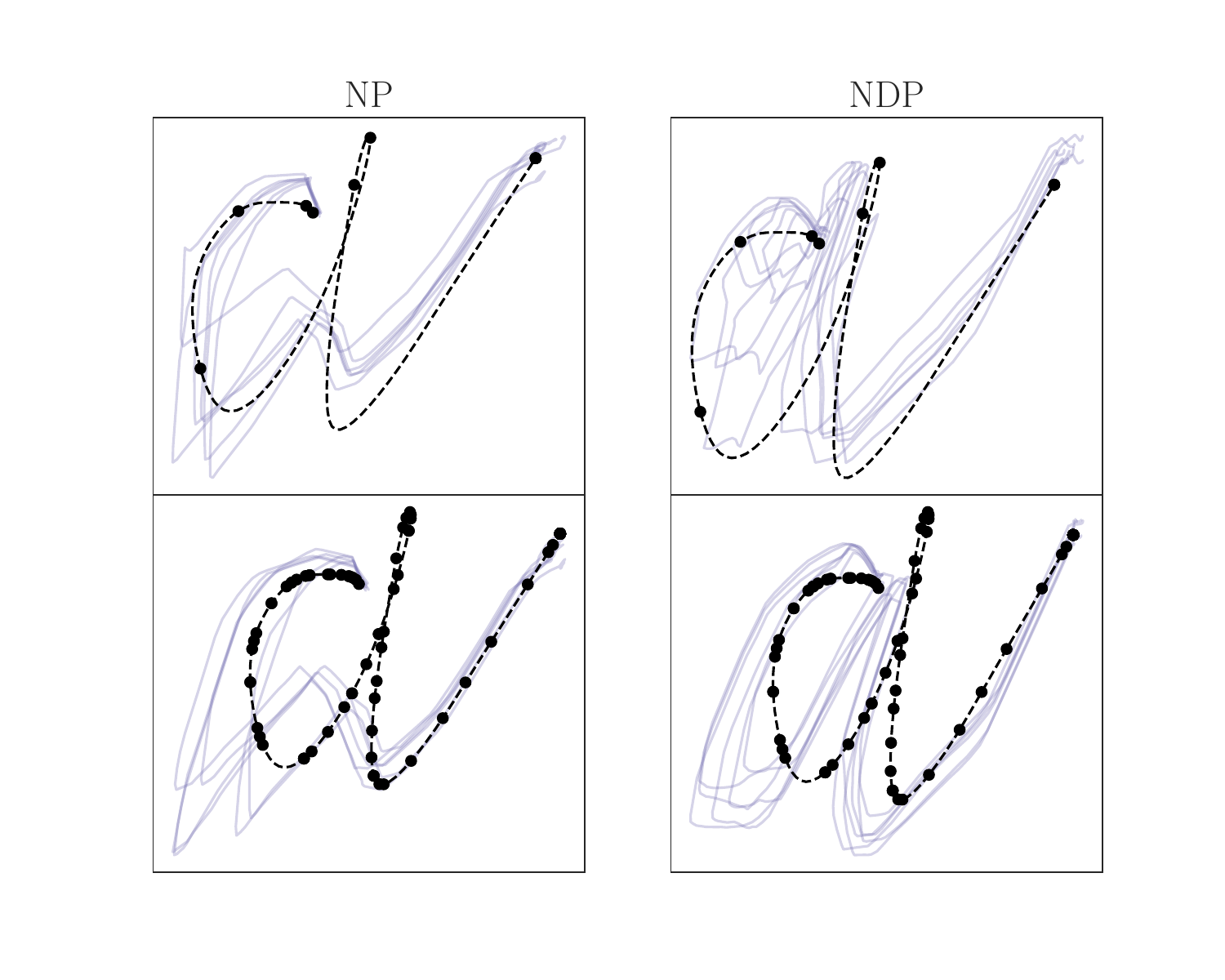}
    \caption{We test the models on drawing the letter ``a'' with varying numbers of context points. For a few context points, the trajectories are diverse and not entirely recognisable. As more context points are observed, the trajectories become less diverse and start approaching an ``a''. We expect that with more training, and tuning the hyperparameters, such as batch size, or the number of hidden layers this model would improve. Additionally, we observe that NDPs qualitatively outperform NPs on a small number and a large number of context points.}
    \label{fig: handwriting_posterior}
\end{figure}

We observe that NPs and NDPs are unable to successfully learn the time series as well as NCDEs. We record final test MSEs $(\times10^{-1})$ at $4.6 \pm 0.1$ for NPs and a slightly lower $3.4 \pm 0.1$ for NDPs. We believe the reason is because handwritten digits do not follow an inherent ODE solution, especially given the diversity of handwriting styles for the same letter. We conjecture that Neural Controlled Differential Equations were able to perform well on this dataset due to the control process. Controlled ODEs follow the equation:

\begin{equation}
    \vz(T) = \vz(t_{0}) + \int_{t_{0}}^\T f_{\theta}(\vz(t), t)\frac{d\mX(t)}{dt}dt, \qquad
    \vz(t_{0}) = h_{1}(\vx(t_{0})), \qquad
    \hat{\vx}(T) = h_{2}(\vz(T))
\end{equation}

Where $\mX(t)$ is the natural cubic spline through the observed points $\vx(t)$. If the learnt $f_{\theta}$ is an identity operation, then the result returned will be the cubic spline through the observed points. Therefore, a controlled ODE can learn an identity with a small perturbation, which is easier to learn with the aid of a control process, rather than learning the entire ODE trajectory. 

\end{document}